\theoremstyle{plain}
\newtheorem{theorem}{Theorem}[section]
\newtheorem{proposition}[theorem]{Proposition}
\newtheorem{lemma}[theorem]{Lemma}
\theoremstyle{definition}
\theoremstyle{remark}
\newcommand{\proj}{\text{RAPO}\xspace}
\newcommand{\x}{x}
\newcommand{\y}{y}
\newcommand{\piref}{\pi_\text{ref}}
\newcommand{\pireftau}{\tilde{\pi}_{\text{ref}}}
\newcommand{\pitheta}{\pi_\theta}
\newcommand{\pithetaold}{\pi_{\theta_{\text{old}}}}
\newcommand{\RAPO}{\mathcal{J}_{\text{RAPO}}}
\newcommand{\J}{\mathcal{J}}
\newcommand{\JFKL}{\mathcal{J}_{\text{FKL}}}
\newcommand{\KL}{\mathbb{D}_{\text{KL}}}
\newcommand{\RKL}{ \KL \left( \pitheta || \piref \right)}
\newcommand{\FKL}{ \KL \left( \piref || \pitheta \right)}
\def\J{\mathcal{J}}
\def\controlobj1d{\mathcal{J}_{\text{actual}}}
\title{Unlocking Reasoning Capabilities in LLMs via Reinforcement Learning Exploration}
\author{
    Wenhao Deng\thanks{Equal Contribution.} \And
    Long Wei$^*$ \And
    Chenglei Yu$^*$ \And
    Tailin Wu\\
    Westlake University 
\\
\texttt{\{dengwenhao,weilong,yuchenglei,wutailin\}@westlake.edu.cn}
}
\begin{document}

\maketitle

\begin{abstract}
Reinforcement learning with verifiable rewards (RLVR) has recently enhanced the reasoning capabilities of large language models (LLMs), particularly for mathematical problem solving. However, a fundamental limitation remains: as the sampling budget increases, the advantage of RLVR-trained models over their pretrained bases often diminishes or even vanishes, revealing a strong dependence on the base model's restricted search space. We attribute this phenomenon to the widespread use of the reverse Kullback-Leibler (KL) divergence regularizer, whose mode-seeking behavior keeps the policy trapped inside the base model's support region and hampers wider exploration. To address this issue, we propose $\proj$ (Rewards-Aware Policy Optimization), an algorithm to promote broader yet focused exploration. Our method (i) utilizes the forward KL penalty to replace the reverse KL penalty for out-of-distribution exploration, and (ii) reweights the reference policy to facilitate adaptive in-distribution exploration. We train Qwen2.5-3B and 7B models with $\proj$ on the 8K SimpleRL-Zero dataset, without supervised fine-tuning, and evaluate them on AIME2024 and AIME2025. Results show that $\proj$ consistently improves problem-solving performance. Notably, $\proj$ enables models to surpass the base model's performance ceiling and solves previously intractable problems, advancing the frontier of RLVR for challenging reasoning tasks.
\end{abstract}

\section{Introduction}
Recent years have witnessed significant advancements in the reasoning capabilities of large language models (LLMs), with breakthrough systems like DeepSeek-R1 \cite{guo2025deepseek} demonstrating exceptional performance. These achievements stem not only from powerful base models but also from reinforcement learning with verifiable rewards (RLVR). By leveraging automatic verification of solution correctness as reward signals, RLVR steers model policies toward high-reward solutions, substantially enhancing reasoning capabilities.

Despite advances, RLVR approaches reveal a crucial limitation: when measured by pass@$k$ metrics, where success requires finding just one correct solution within $k$ attempts, a counterintuitive phenomenon emerges. At a low budget, RLVR-trained models consistently outperform their pre-trained base models, indicating more efficient sampling of correct answers. However, as attempts increase, this advantage not only disappears but often reverses completely: base models eventually achieve equal or superior pass@$k$ scores compared to their RL-trained versions. Recent empirical studies \cite{dang2025assessing,yue2025does} confirm this phenomenon across various model families and reasoning domains.

The finding implies that rather than endowing LLMs with fundamentally new reasoning strategies, RLVR primarily reshapes the output distribution by concentrating probability mass onto familiar reasoning paths already present in the base model’s solution space. While the redistribution increases the likelihood of high-quality responses in a small number of samples, it inadvertently narrows the model’s overall reasoning diversity. Contrary to the widespread belief that RL incentivizes continual self-improvement \cite{guo2025deepseek}, RLVR-trained models tend to become less exploratory and, even at scale, remain constrained by the inherent limitations of their base models. This contradiction prompts a crucial question:

\textbf{RQ}: \textit{How can we develop RLVR methods that enable effective exploration beyond the base model's distribution to solve previously intractable problems?}

We identify the widespread use of reverse Kullback-Leibler (KL) divergence regularization as the primary cause of the limitation. Reverse KL divergence exhibits mode-seeking behavior, which forces the fine-tuned policy to remain within high-density regions of the base model's distribution. While this stabilizes the training process, it simultaneously restricts exploration beyond the base model's support (nonzero probability) region, precluding discovery of novel solutions located beyond the support of the reference policy but with high rewards, as shown in Figure \ref{fig:overview} (a).

\begin{figure}
    \centering
    \includegraphics[width=\linewidth]{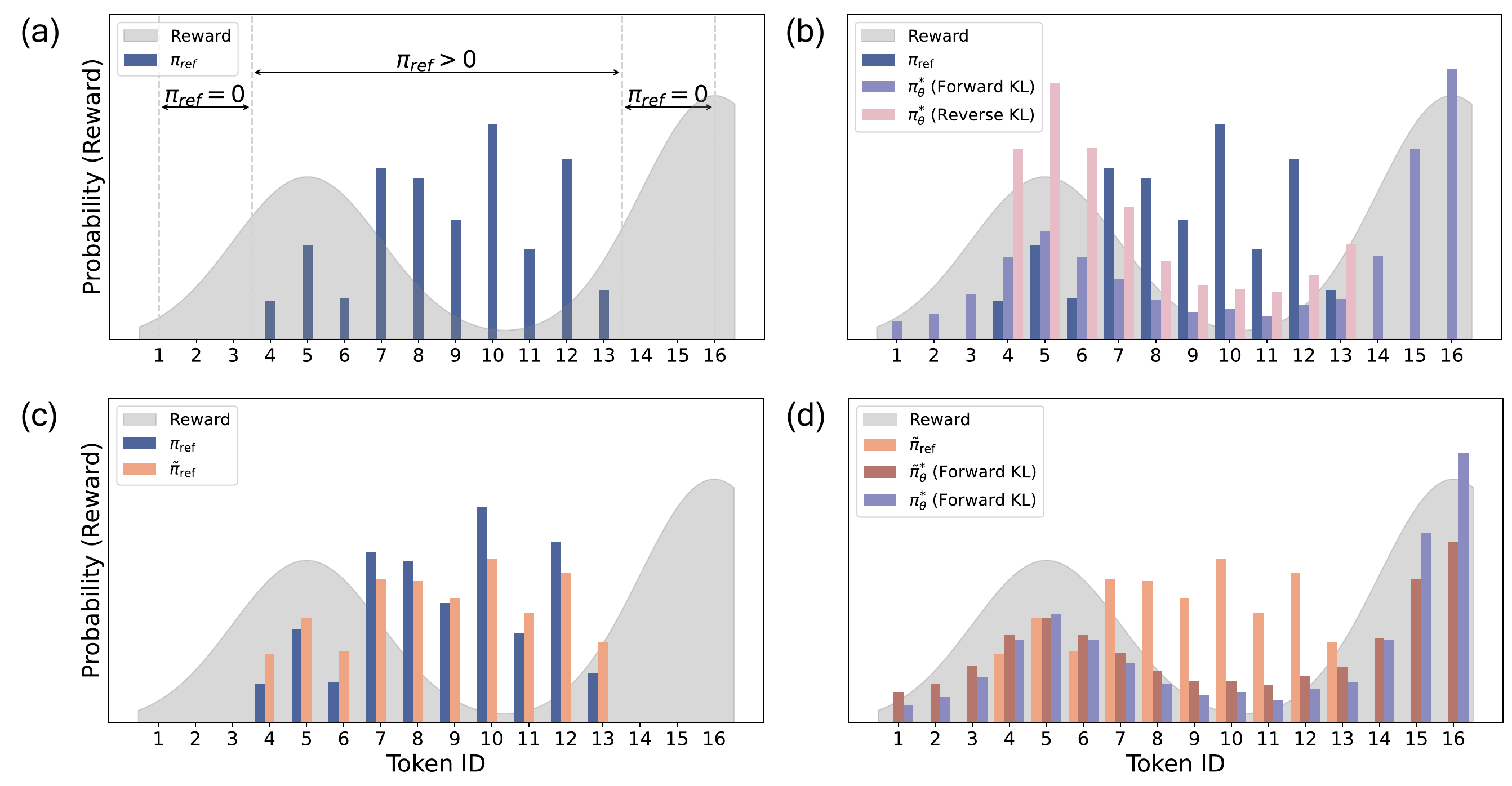}
    \caption{
    \textbf{Approach Motivation and Illustration.} The reference model $\piref$ and the reward function are shared among four subfigures. \textbf{(a)} High-reward regions with low/zero probability in the reference model are underexplored yet. \textbf{(b)} RLVR with our proposed forward KL divergence facilitates out-of-distribution exploration, overcoming reverse KL divergence limitations. \textbf{(c)} Our reward-aware reference policy reweighting mechanism for adaptive in-distribution exploration. \textbf{(d)} $\proj$, integrating the reweighted reference policy with forward KL divergence optimization, boosts exploration effectiveness.
  }
    \label{fig:overview}
\end{figure}

To overcome this issue, we introduce $\proj$ (Rewards-Aware Policy Optimization), a novel RLVR method designed to enable more effective exploration while maintaining solution quality.
 $\proj$ incorporates two key innovations:
\textbf{First}, we replace the conventional reverse KL divergence with forward KL divergence to enable \textit{out-of-distribution exploration}. Unlike reverse KL, forward KL permits the policy to assign  probability mass determined by observed rewards to regions where the reference policy has low or zero density, facilitating the discovery of solutions beyond the base model's support, as illustrated in Figure \ref{fig:overview} (b).
\textbf{Second}, we develop a reward-aware reference policy reweighting mechanism for adaptive \textit{in-distribution exploration}.
This mechanism dynamically reweights the reference policy based on observed rewards. It promotes greater exploration for low-reward regions while preserving the reference distribution in high-reward regions, as shown in Figure \ref{fig:overview} (c). 
Finally, the reweighted reference policy integrates into our forward KL divergence optimization, yielding more effective rewards-aware exploration across both out-of-distribution and in-distribution regions, as shown in Figure \ref{fig:overview} (d).

We evaluate $\proj$ by training two Qwen2.5 models (7B and 3B parameters) \cite{yang2024qwen2} on the SimpleRL-Zero dataset containing 8,000 mathematical problems. Without supervised fine-tuning, these models were tested on challenging mathematical reasoning benchmarks, including AIME2024 and AIME2025.
Experimental results demonstrate that $\proj$
significantly outperforms traditional RLVR approaches as sampling increases and can surpass the performance ceiling of base models. 
Notably, our method achieves remarkable success on problems that are entirely unsolvable by the base models under sufficient number of samples.

In summary, we make the following contributions: \textbf{(1)} We propose a RLVF exploration method that enables models to discover solutions beyond their base distribution while maintaining focused search in promising regions;
\textbf{(2)} Experiments on Qwen-2.5 models and challenging mathematical benchmarks demonstrate that our method improves reasoning capabilities across sampling budgets and solves previously intractable problems for base models.
\section{Related Work}
\subsection{RLVR for LLM Reasoning}
Recent research has demonstrated significant improvements in LLM reasoning capabilities across mathematics, programming, and scientific reasoning domains by leveraging increased computational effort during inference \cite{snell2024scaling} with pretrained base models. These approaches span a wide spectrum, from Chain-of-Thought prompting \cite{wei2022chain,yao2023tree} and process-based reward models \cite{uesato2022solving,lightman2023let,setlur2024rewarding} to Monte Carlo Tree Search \cite{feng2023alphazero,trinh2024solving} and scaled sampling with self-verification \cite{zhao2025sample,lifshitz2025multi}.
The breakthrough success of advanced models such as OpenAI-o1 \cite{OpenAIo1} and DeepSeek-R1 \cite{guo2025deepseek} has established Reinforcement Learning with Verifiable Rewards (RLVR) \cite{zelikman2022star,ouyang2022training,kumar2024training} as the dominant paradigm for enhancing LLM reasoning. RLVR optimizes rewards attached to sampled responses, shifting probability mass toward high-quality reasoning patterns and effectively transforming base models into more capable reasoning systems. This proven approach has inspired numerous follow-up studies \cite{liu2025understanding,li2025limr,yu2025dapo,muennighoff2025s1,setlur2025scaling,team2025kimi} that further refine and extend these techniques.

\subsection{Reinforcement Learning Exploration with KL Divergence}
KL divergence regularization plays a crucial role in RLVR approaches by preventing model outputs from deviating excessively from the base distribution \cite{wen2024entropy}. However, the specific formulation of KL divergence fundamentally impacts exploration behavior \cite{havrilla2024teaching,li2025choice}. As we demonstrate in the next section, reverse KL divergence inherently constrains models from exploring reasoning paths beyond the reference model's support region, establishing a performance ceiling that limits further improvements. Recent empirical studies \cite{dang2025assessing,yue2025does} have confirmed this limitation across various RLVR implementations. Forward KL divergence offers an alternative that enables policies to explore high-reward regions even with low probability in the base distribution \cite{zeng2024token}. Recent innovations like $f$-DPO \cite{wang2023beyond} and ETPO \cite{wen2024entropy} have built upon these insights to enhance exploration capabilities. Our approach combines forward KL divergence with a reward-aware reference policy reweighting mechanism to facilitate both in-distribution and out-of-distribution exploration, directly addressing the limitations of current RLVR methods.

\section{Method}\label{sec:Method}
In this section, we detail our method \proj. Section \ref{sec:preliminary} gives preliminaries and analyzes why reverse KL divergence approaches fail. In Section \ref{sec:fkl}, we introduce our forward KL divergence optimization for out-of-distribution exploration, and in Section \ref{sec:reward_fkl} we propose the reward-aware reweighting technique of the reference policy to promote in-distribution exploration. Section \ref{sec:implementation} presents the implementation (pseudocode in Algorithm \ref{alg}). All proofs are deferred to Appendix \ref{app:derivation}.

\subsection{Preliminary: Why Does Reverse KL Fail?}\label{sec:preliminary}
Let $\piref(\y|\x)$ be a pre-trained reference LLM model, which generates a response $\y$ given a question $\x$, and $\pitheta$ be the RLVR-trained model initialized by $\piref$. 
Prior RLVR methods employ the reverse Kullback-Leibler (KL) divergence, $\KL \left( \pitheta || \piref \right) = \mathbb{E}_{\pitheta}\left[\log\frac{\pitheta}{\piref}\right]$, as a regularizer to constrain policy shifts. The objective is to maximize:

\begin{equation}
\begin{split}
\J(\theta) = \mathbb{E}_{\x \sim P(x), \y\sim \pitheta(\y|\x)} 
[r(\x,\y)]  -\alpha \underbrace{\RKL}_{\text{Reverse KL divergence}}.
\label{eq:origional_RKL}
\end{split}
\end{equation}
However, reverse KL regularization inherently restricts the support of $\pitheta$ to that of $\piref$. Formally, we have the following property of $\RKL$:
\begin{lemma}\label{lemma:1}
    The optimal policy $\pitheta^{\star}$ to the problem Eq. \ref{eq:origional_RKL} satisfies
    \begin{equation}
        \pitheta^{\star}(y|x)\propto e^{\frac{r(\x,\y)}{\alpha}}\piref(\y|\x).
    \end{equation}
\end{lemma}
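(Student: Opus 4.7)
The plan is to derive the optimal $\pitheta^\star$ by a pointwise constrained optimization argument. Since the outer expectation over $\x \sim P(x)$ decouples the problem across prompts, I would fix an arbitrary $\x$ and optimize the inner functional over the response distribution $\pi(\cdot|\x)$. Writing $\mathcal{J}_x(\pi) = \sum_{\y} \pi(\y|\x)\,r(\x,\y) - \alpha \sum_{\y} \pi(\y|\x)\log\frac{\pi(\y|\x)}{\piref(\y|\x)}$, the task reduces to maximizing $\mathcal{J}_x$ over the probability simplex $\{\pi : \pi(\y|\x)\geq 0,\ \sum_\y \pi(\y|\x)=1\}$.

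Next I would introduce a Lagrange multiplier $\lambda(\x)$ for the normalization constraint and differentiate the Lagrangian with respect to $\pi(\y|\x)$. The stationarity condition is
\begin{equation*}
r(\x,\y) - \alpha\bigl(\log\pi(\y|\x) - \log\piref(\y|\x) + 1\bigr) - \lambda(\x) = 0,
\end{equation*}
which, upon solving for $\pi(\y|\x)$, yields $\pi(\y|\x) = \piref(\y|\x)\,\exp\!\bigl(r(\x,\y)/\alpha\bigr)\,\exp\!\bigl(-1-\lambda(\x)/\alpha\bigr)$. The last factor is a $\y$-independent constant, so enforcing $\sum_\y \pi(\y|\x)=1$ absorbs it into a normalizer $Z(\x) = \sum_\y \piref(\y|\x)\,e^{r(\x,\y)/\alpha}$, delivering $\pitheta^\star(\y|\x) = \piref(\y|\x)\,e^{r(\x,\y)/\alpha}/Z(\x)$, which is the claimed proportionality.

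To justify that this critical point is actually the maximum, I would note that $\mathcal{J}_x$ is strictly concave in $\pi(\cdot|\x)$: the linear reward term is concave, and $-\KL(\pi\|\piref)$ is strictly concave in $\pi$, so the unique stationary point on the simplex is the global maximizer. An equivalent (and arguably cleaner) route is to rewrite $-\mathcal{J}_x(\pi)/\alpha$ up to a $\pi$-independent constant as $\KL\!\bigl(\pi(\cdot|\x)\,\big\|\,\piref(\cdot|\x)e^{r(\x,\y)/\alpha}/Z(\x)\bigr)$, whose minimum at zero is attained exactly when $\pi$ equals the stated Gibbs distribution; this presentation also makes the support restriction transparent, since the KL is finite only when $\pi\ll\piref$.

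There is no serious obstacle here — the argument is a standard variational calculation. The only minor care point is the handling of the support: on $\{\y : \piref(\y|\x)=0\}$ the reverse KL forces $\pitheta^\star(\y|\x)=0$, which is automatically consistent with the formula (the exponential factor multiplies zero), and this is precisely the structural observation the paper will exploit in the subsequent sections to motivate replacing reverse KL with forward KL.
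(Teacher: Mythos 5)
Your proposal is correct and follows essentially the same route as the paper's proof: introduce a Lagrange multiplier for the normalization constraint, set the (functional) derivative of the Lagrangian to zero, solve for $\pitheta$, and absorb the $\y$-independent exponential factor into the normalizer. The only additions beyond the paper's argument are the strict-concavity check confirming the stationary point is the global maximizer and the equivalent KL-rewriting, both of which strengthen rather than change the argument.
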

The proof of Lemma \ref{lemma:1} and the following Lemma \ref{lemma:2} are provided in Appendix \ref{app:derivation}. Thus, reverse KL divergence optimization can only reweight probability mass by rewards within the support of $\piref$ and never assign positive probability to regions where $\piref(y|x)=0$,
as shown in Figure \ref{fig:overview} (b). A common method to encourage exploration is to add a maximum entropy term $H(\pitheta)$, leading to the following objective:
\begin{equation}
\begin{split}
\J(\theta) = \mathbb{E}_{\x \sim P(x), \y\sim \pitheta(\y|\x)} 
[r(\x,\y)]  -\alpha \RKL + \beta H(\pitheta)
\label{eq:origional_RKL_max_entropy}
\end{split}
\end{equation}
However, this does not overcome the support limitation, as shown below.
\begin{lemma}\label{lemma:2}
    The optimal policy $\pitheta^{\star}$ to the problem Eq. \ref{eq:origional_RKL_max_entropy} satisfies
    \begin{equation}
    \pitheta^{\star}(y|x)\propto e^{\frac{r(\x,\y)}{\alpha + \beta}}\piref(\y|\x)^{\frac{\alpha}{\alpha+\beta}}.
    \end{equation}
\end{lemma}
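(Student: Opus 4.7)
The plan is to reduce the maximum-entropy-augmented problem in Eq.~\ref{eq:origional_RKL_max_entropy} to the plain reverse-KL problem already solved in Lemma \ref{lemma:1}, via an algebraic regrouping of the regularization terms. Concretely, I would first expand the regularizer and the entropy pointwise:
\begin{equation*}
-\alpha\,\mathbb{E}_{\pitheta}\!\left[\log\tfrac{\pitheta}{\piref}\right] - \beta\,\mathbb{E}_{\pitheta}\!\left[\log\pitheta\right]
= -(\alpha+\beta)\,\mathbb{E}_{\pitheta}\!\left[\log\pitheta\right] + \alpha\,\mathbb{E}_{\pitheta}\!\left[\log\piref\right].
\end{equation*}
Pulling the factor $(\alpha+\beta)$ outside and moving the $\alpha\log\piref$ term inside the logarithm yields, up to an additive constant independent of $\pitheta$,
\begin{equation*}
-(\alpha+\beta)\,\mathbb{E}_{\pitheta}\!\left[\log\tfrac{\pitheta(\y|\x)}{\piref(\y|\x)^{\alpha/(\alpha+\beta)}}\right].
\end{equation*}

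Next, I would define the auxiliary (normalized) reference $\tilde{\piref}(\y|\x) := Z(\x)^{-1}\,\piref(\y|\x)^{\alpha/(\alpha+\beta)}$ with $Z(\x)=\sum_{\y}\piref(\y|\x)^{\alpha/(\alpha+\beta)}$. Substituting this definition converts the objective of Eq.~\ref{eq:origional_RKL_max_entropy} into
\begin{equation*}
\J(\theta) = \mathbb{E}_{\x\sim P(\x),\,\y\sim\pitheta}[r(\x,\y)] - (\alpha+\beta)\,\KL\!\left(\pitheta \,\|\, \tilde{\piref}\right) + \text{const},
\end{equation*}
which is precisely the reverse-KL problem of Eq.~\ref{eq:origional_RKL} with effective temperature $\alpha+\beta$ and reference $\tilde{\piref}$. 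Invoking Lemma \ref{lemma:1}, the optimum satisfies
$\pitheta^{\star}(\y|\x)\propto e^{r(\x,\y)/(\alpha+\beta)}\,\tilde{\piref}(\y|\x)$, and since the $\x$-dependent factor $Z(\x)^{-1}$ can be absorbed into the proportionality constant (which is taken per $\x$), this simplifies to $\pitheta^{\star}(\y|\x)\propto e^{r(\x,\y)/(\alpha+\beta)}\,\piref(\y|\x)^{\alpha/(\alpha+\beta)}$, as claimed.

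The only subtle point, which I would flag rather than treat as routine, is ensuring that $\tilde{\piref}$ is well-defined, i.e.\ that $Z(\x)<\infty$ for every $\x$ in the support of $P$. For finite response spaces (as in LLM decoding with a bounded vocabulary and bounded generation length) this is automatic, since $\piref^{\alpha/(\alpha+\beta)}$ is a finite sum of nonnegative terms. I expect no genuine obstacle here; the main pedagogical care is in tracking that the constant $-(\alpha+\beta)\log Z(\x)$ arising from renormalization does not depend on $\theta$ and hence does not affect the argmax, so reusing Lemma \ref{lemma:1} is legitimate. The support conclusion also transfers cleanly: since $\tilde{\piref}$ has the same support as $\piref$, the optimal $\pitheta^{\star}$ under this augmented objective remains confined to $\mathrm{supp}(\piref)$, reinforcing the paper's motivation that maximum-entropy augmentation alone does not cure the support limitation.
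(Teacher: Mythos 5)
Your proof is correct, but it follows a different route from the paper's. The paper proves Lemma \ref{lemma:2} by redoing the variational calculation from scratch: it writes the Lagrangian with the multiplier for the normalization constraint, takes the functional derivative $r(x,y)+\alpha\log\piref(y|x)-(\alpha+\beta)(1+\log\pitheta(y|x))+\lambda=0$, and solves for $\pitheta$ directly, mirroring its proof of Lemma \ref{lemma:1}. You instead regroup $-\alpha\KL(\pitheta\|\piref)+\beta H(\pitheta)=-(\alpha+\beta)\mathbb{E}_{\pitheta}[\log\pitheta]+\alpha\mathbb{E}_{\pitheta}[\log\piref]$, absorb the exponent into a renormalized reference $\tilde{\pi}(\y|\x)\propto\piref(\y|\x)^{\alpha/(\alpha+\beta)}$, and invoke Lemma \ref{lemma:1} with effective coefficient $\alpha+\beta$; the per-$\x$ constant $(\alpha+\beta)\log Z(\x)$ indeed does not affect the argmax, and $Z(\x)<\infty$ is automatic on the finite sequence space, so the reduction is sound (note that Lemma \ref{lemma:1} and its proof place no special requirement on the reference beyond being a normalized distribution, so applying it to $\tilde{\pi}$ is legitimate). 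Your reduction buys economy and an interpretation the paper only states informally: entropy regularization merely tempers the reference to $\piref^{\alpha/(\alpha+\beta)}$ without changing its support, which is exactly the point of the lemma; the paper's direct derivation buys self-containedness and a template that generalizes to objectives (like its Proposition \ref{prop:1}) that cannot be folded back into a single reverse-KL term. Your observation that $\mathrm{supp}(\tilde{\pi})=\mathrm{supp}(\piref)$ (for $\alpha>0$) correctly reinforces the intended conclusion.
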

This again shows that the reverse KL term fundamentally limits the support of $\pitheta$. 
Intuitively, minimizing $\RKL$ forces $\pitheta$ to be small wherever $\piref$ is small, since otherwise $\log \frac{\pitheta}{\piref}$ becomes large. When $\piref$ is 0 (outside the support of $\piref$), this reverse KL term forces $\pitheta$ to be also 0, regardless of whether maximum entropy is present. Targeting this limitation of reverse KL divergence, our $\proj$ enhances exploration from both out-of-distribution (outside the support of $\piref$) and in-distribution (inside the support of $\piref$) aspects.
\subsection{Forward KL Divergence: Out-of-distribution Exploration}\label{sec:fkl}
To achieve out-of-distribution exploration, we propose using the \textit{forward KL divergence} $\KL \left( \piref||\pitheta \right) = \int \piref\log\frac{\piref}{\pitheta}$ in RLVR, instead of the reverse KL divergence.
Our following analysis in this subsection aims to justify such exploration.

By introducing the forward KL divergence, the training objective with entropy maximization becomes:
\begin{equation}
\begin{split}
\JFKL(\theta) = \mathbb{E}_{\x \sim P(x), \y\sim \pitheta(\y|\x)} 
[r(\x,\y)]  -\alpha \underbrace{\textcolor{blue}{\FKL}}_{\text{Forward KL divergence}} + \beta H(\pitheta).
\label{eq:FKL_max_entropy}
\end{split}
\end{equation}
To optimize this objective under the constraint $\int_y\pitheta(y|x)\text{d}y = 1$, we introduce a Lagrange multiplier $\lambda$, leading to the following unconstrained form:
\begin{equation}
\begin{split}
\JFKL(\theta) = \mathbb{E}_{\x \sim P(x), \y\sim \pitheta(\y|\x)} 
[r(\x,\y)]  -\alpha \FKL + \beta H(\pitheta) -\lambda(\int_y\pitheta(y|x)\text{d}y - 1).
\label{eq:FKL_max_entropy_lang}
\end{split}
\end{equation}
Since LLMs generate discrete token sequences, Eq. \ref{eq:FKL_max_entropy_lang} can be rewritten in the following \textit{discrete} form:
\begin{equation}
\begin{split}
\JFKL(\theta) = &\sum_i \pitheta(y_i|x) r(\x,\y_i)  + \alpha \sum_i \piref(\y_i|x)\log \pitheta(\y_i|x) - \beta \sum_i\pitheta(\y_i|x)\log\pitheta(\y_i|x)  \\
&-\lambda(\sum_i\pitheta(\y_i|x)- 1) +\text{const}.
\label{eq:discrete_FKL_max_entropy}
\end{split}
\end{equation}
Here, $\y_i=[y^{(1)}_i,\cdots,y^{(L)}_i]$ (each $y^{(l)}_i$ is a predicted token) indexes the finite set of all output sequences up to a fixed maximum length $L$, so the summation in Eq. \ref{eq:discrete_FKL_max_entropy} is over a finite number of terms. We now formalize the optimal policy under this objective, with proof provided in Appendix \ref{app:derivation}.
\begin{proposition}\label{prop:1}
The optimal solution $(\pitheta^{\star}, \lambda^{\star})$ to the problem Eq. \ref{eq:discrete_FKL_max_entropy} satisfies:
    \begin{equation}
        \pitheta^{\star}(\y_i|\x) =
        \begin{cases}
        g(\piref(\y_i|\x), r(\x,\y_i); \alpha, \beta, \lambda), & \piref(\y_i|\x) > 0; \\
        e^{-1-\lambda/\beta+r(\x,\y_i)/\beta}, & \piref(\y_i|\x) = 0.
        \end{cases}
    \end{equation}
    where $g$ is a function determined by $\piref(\y_i|\x), r(\x,\y_i)$ and parameters $\alpha, \beta, \lambda$. The optimal multiplier $\lambda^{\star}$ is determined by the constraint $\sum_i\pitheta^{\star}(\y_i|\x)=1$.
\end{proposition}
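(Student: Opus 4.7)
The plan is to treat Eq.~\ref{eq:discrete_FKL_max_entropy} as a finite-dimensional concave maximization problem in the vector $\{\pitheta(\y_i|\x)\}_i$ and derive the optimum via pointwise first-order conditions. First, I would verify concavity in $\pitheta$: the reward and Lagrangian terms are linear in $\pitheta(\y_i|\x)$, $-\beta p\log p$ is concave in $p$, and $\alpha\piref(\y_i|\x)\log p$ is concave for each fixed $\piref(\y_i|\x)\ge 0$. Any stationary point of the Lagrangian is therefore the unique global maximizer. Differentiating $\JFKL$ with respect to $\pitheta(\y_i|\x)$ gives the stationarity condition
\begin{equation*}
r(\x,\y_i) + \alpha\frac{\piref(\y_i|\x)}{\pitheta(\y_i|\x)} - \beta\bigl(\log\pitheta(\y_i|\x)+1\bigr) - \lambda = 0.
\end{equation*}
Since $-\beta p\log p$ has derivative $-\beta(\log p+1)\to+\infty$ as $p\to 0^+$, the unconstrained maximum lies in the open positive orthant, justifying pointwise differentiation at every coordinate.

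Next I would split cases based on whether $\piref(\y_i|\x)$ vanishes. If $\piref(\y_i|\x)=0$, the middle term drops out and the equation is linear in $\log\pitheta(\y_i|\x)$, solving explicitly to $\pitheta^{\star}(\y_i|\x)=e^{-1-\lambda/\beta+r(\x,\y_i)/\beta}$, which matches the claimed closed form. If $\piref(\y_i|\x)>0$, rearranging the stationarity condition to
\begin{equation*}
\beta(\log p + 1) + \lambda - r(\x,\y_i) \;=\; \alpha\,\piref(\y_i|\x)/p
\end{equation*}
with $p=\pitheta(\y_i|\x)$, I observe that the left side is strictly increasing in $p$ (derivative $\beta/p>0$), ranging from $-\infty$ to $+\infty$, while the right side is strictly decreasing, ranging from $+\infty$ down to $0^+$. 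Hence a unique positive root exists; this root implicitly defines $g(\piref(\y_i|\x),r(\x,\y_i);\alpha,\beta,\lambda)$ as in the proposition. The equation is of Lambert-$W$ type, so no elementary inversion is available, but an implicit characterization is all that is required.

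Finally, $\lambda^{\star}$ is fixed by imposing the normalization constraint $\sum_i\pitheta^{\star}(\y_i|\x)=1$. Both branches of $\pitheta^{\star}$ are continuous and strictly decreasing in $\lambda$: the closed-form branch directly, and the implicit branch by the implicit function theorem applied to the FOC (the derivative sign follows from differentiating the FOC in $\lambda$). The total sum therefore decreases continuously from $+\infty$ to $0$ as $\lambda$ traverses $\mathbb{R}$, yielding a unique $\lambda^{\star}$.

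The main obstacle is the positive-$\piref$ case, where the FOC admits no elementary closed form; the cleanest route is the monotonicity argument above, which converts the implicit definition of $g$ into a rigorous existence-and-uniqueness statement without invoking Lambert-$W$ explicitly. A secondary, routine technicality is verifying the interior-optimum claim when some $r(\x,\y_i)$ are very negative, which follows from the explosive gradient of $-\beta p\log p$ near $p=0$.
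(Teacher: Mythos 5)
Your proposal is correct and follows essentially the same route as the paper's proof: differentiate the Lagrangian pointwise, split on whether $\piref(\y_i|\x)$ vanishes, solve the zero-reference case in closed form, and characterize the positive-reference case as the root of a monotone equation that implicitly defines $g$, with $\lambda^{\star}$ fixed by normalization. Your additions (concavity for sufficiency, strict monotonicity for uniqueness of the root and of $\lambda^{\star}$, and the correct $-\beta(\log\pitheta+1)$ entropy derivative, which the paper's displayed gradient mistypes) only strengthen the same argument.
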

The significance of this proposition is particularly evident in regions outside the support of $\piref$.
In these regions, sequences with higher rewards are assigned greater sampling probabilities under forward KL divergence optimization, as $ \pitheta^{\star}(\y_i|\x)\propto e^{r(\x,\y_i)/\beta}$. This contrasts sharply with reverse KL divergence optimization, which would assign zero sampling probability to such sequences outside the support of $\piref$, as compared in Figure \ref{fig:overview} (b).

\begin{wrapfigure}{R}{0.45\textwidth}
    \vspace{0cm}
    \centering
    \includegraphics[width=\linewidth]{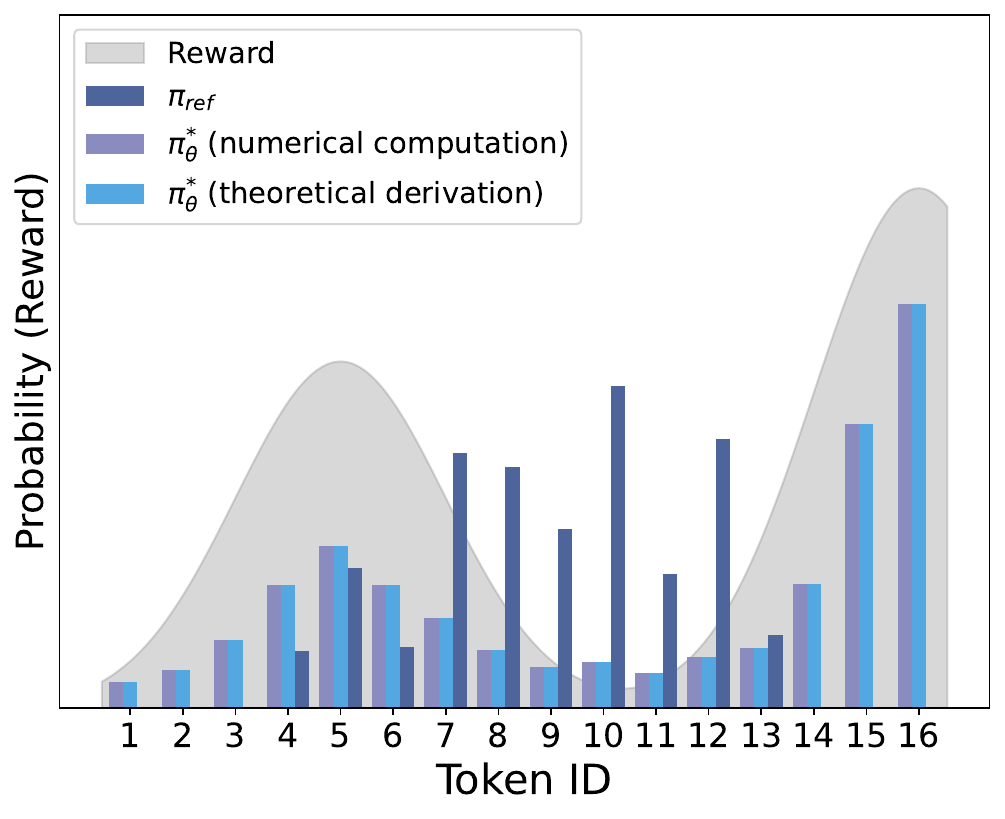}
    \caption{\textbf{Illustration of the forward KL based optimization}. The support of $\pitheta^{\star}$ extends beyond that of $\piref$ (token IDs = 0, 1, 2, 13, 14, 15). The numerical solution from gradient descent optimization of Eq. \ref{eq:discrete_FKL_max_entropy} matches the numerical root of the equation in the theoretical result of Proposition \ref{prop:1}.}
    \label{fig:2}
    \vspace{0.\baselineskip}
\end{wrapfigure}

We illustrate forward KL’s ability to assign nonzero mass outside the reference support via a toy experiment on a finite token space (Figure \ref{fig:2}). 
As token rewards change, the optimized policy adaptively boosts sampling probabilities for high-reward tokens initially absent from $\piref$.
Moreover, the policy obtained by gradient-descent computation on Eq. \ref{eq:discrete_FKL_max_entropy} matches exactly the optimal solution given by Proposition \ref{prop:1}, computed via iterative root‐finding.
\subsection{Reward-aware Reference Policy Reweighting: In-distribution Exploration}\label{sec:reward_fkl}
To complement the out-of-distribution exploration discussed above, we consider in-distribution exploration in this subsection. While entropy maximization provides a mechanism for reweighting the reference policy, it remains blind to reward signals. To adaptively balance exploration and exploitation based on reward feedback, we develop a reward-aware reference policy reweighting mechanism. Our \textit{reweighted reference policy} $\pireftau$ is formulated as:
\begin{equation}\label{eq:pireftau}
\pireftau(y|x)={\pi_\text{{ref}}^{\phi(r(x,y))}(y|x)}/{Z},
\end{equation}
where the \textit{reweight function} $\phi$ adjusts the exponent of $\piref$ according to the reward $r(\x,\y)$, and $Z=\int_y\pi_\text{{ref}}^{\phi(r(x,y))}(y|x)\text{d}y$ normalizes the distribution. In practice, $Z$ is computed by first applying $\phi$ to the discrete  probability $\piref$'s output and then summing over the vocabulary.
The function $\phi(r)$ should be monotonically increasing with values inside the range $[0,1]$.
When the reward $r$ is high, $\phi(r)$ approaches 1, reducing the degree of reweighting and keeping $\pireftau$ closer to $\piref$ to leverage existing reasoning capabilities. Conversely, when the reward $r$ is low, $\phi(r)$ approaches 0, increasing reweighting and pushing $\pireftau$ toward a more uniform distribution to encourage exploration. In particular, when $\phi(r)=1$ for any $r$, we have the special case $\pireftau=\piref$.
Choices for the design of $\phi$ is specified in Appendix \ref{app:training_details}.
Combining $\pireftau$ with the forward KL optimization in Eq. \ref{eq:FKL_max_entropy} yields our final objective:
\begin{equation}
\begin{split}
\JFKL(\theta) = \mathbb{E}_{\x \sim P(x), \y\sim \pitheta(\y|\x)} 
[r(\x,\y)]  - \alpha \KL \left( \textcolor{blue}{\pireftau} || \pitheta \right) + \beta H(\pitheta),
\label{eq:our_obj}
\end{split}
\end{equation}
whose optimal solution $\pireftau^*$ is compared with the optimal solution $\piref^*$ to Eq. \ref{eq:discrete_FKL_max_entropy} in Figure \ref{fig:overview} (d), where $\pireftau^*$ shows better diversity and obtains better consistency between sampling probabilities and rewards. Note that the second term in Eq. \ref{eq:our_obj} is generic and can be applied to a wide range of RL algorithms. Here, we incorporate it with GRPO in \cite{guo2025deepseek}, yielding our proposed $\proj$ algorithm.

\subsection{Implementation}\label{sec:implementation}
Now we present how to implement the optimization Eq. \ref{eq:our_obj} by our $\proj$ algorithm. 
\begin{equation}\label{eq:grpo}
\begin{split}
\RAPO(\theta)  = \mathbb{E}_{x \sim P(x), \{y_i\}_{i = 1}^G \sim \pithetaold(y|x)} 
\frac{1}{G}\sum_{i = 1}^G \left( g(\theta)  - \alpha \KL \left( \pireftau || \pitheta \right) + \beta H(\pitheta) \right),
\end{split}
\end{equation}
where $g(\theta)$ represents the clipped advantage-weighted policy gradient from GRPO \cite{shao2024deepseekmath}:
\begin{equation}
g(\theta)=\min \left( \frac{\pitheta(\y_i|x)}{\pithetaold(\y_i|x)} A_i, \text{clip} \left( \frac{\pitheta(\y_i|x)}{\pithetaold(\y_i|x)}, 1 - \varepsilon, 1 + \varepsilon \right) A_i \right),
\end{equation}
with normalized advantages $A_i=\frac{r_i - \text{mean}(\{r_1,\cdots,r_G\})}{\text{std}(\{r_1,\cdots,r_G\})}$ calculated from rewards $r_i=r(x,y_i)$ inside a group of $G$ solutions. 
To maximize the benefits of online exploration, during the RLVR process, we sample $\y_i$ from the $\pitheta$ instead of $\pireftau$. The forward KL divergence term $\KL \left( \pireftau||\pitheta \right)$ is calculated using the low variance estimation \cite{Schulman2020approximating}:
\begin{equation}
\KL \left(\pireftau || \pitheta\right) = \frac{\pireftau(\y_i|x)}{\pitheta(\y_i|x)}\log\frac{\pireftau(\y_i|x)}{\pitheta(\y_i|x)}  - \frac{\pireftau(\y_i|x)}{\pitheta(\y_i|x)} + 1.
\end{equation} 
This estimator has an important property: if we define a function $h(r)=r\log r-r+1$, then $\lim_{r\rightarrow{0^+}}h(r)=1$. 
Consequently, in regions where $\pireftau$ assigns low probability, $\pitheta$ can explore freely. 
This aligns with our original intention of introducing the forward KL divergence to promote exploration in regions beyond the support of the reweighted reference policy $\pireftau$. A detailed training process is presented in Algorithm \ref{alg}.

\begin{algorithm}[t]
\caption{\proj (Reward-Aware Policy Optimization)}
\label{alg}
\KwIn{reference policy $\piref$; reward function $r$; reweight function $\phi$; training dataset $\mathcal{D}$; hyperparameters $\alpha,\beta,N,M,K,G$}
\KwOut{policy $\pi_{\theta}$}
\BlankLine
Initialize $\pitheta \gets \piref$ \;
\For{$n=1$ \KwTo $N$}{
  \For{$m=1$ \KwTo $M$}{
    $\pi_{\theta_{\mathrm{old}}}\gets\pi_{\theta}$\;
    Sample batch $\mathcal{D}_b\sim\mathcal{D}$\ and  $\{\y_i\}_{i=1}^G \sim \pi_{\theta_{\mathrm{old}}}(\cdot\mid \x)$ for all $x\in\mathcal{D}_b$\;
    Compute rewards $\{r(x,y_i)\}_{i=1}^G$\;
    Compute $\hat A_i$ by group-relative advantage estimation\;
    Compute $\pireftau$\ by Eq. \ref{eq:pireftau}\;
    \For{$k=1$ \KwTo $K$}{
      Update $\pi_{\theta}$ by maximizing the objective Eq. \ref{eq:grpo}\;
    }
  }
  $\pi_{\mathrm{ref}}\gets\pi_{\theta}$\;
}
\Return{$\pi_{\theta}$}
\end{algorithm}

\section{Experiments}
In this section, we aim to answer the following question: 
\textit{Can our RAPO method transcend the reasoning limit of the base model and outperform previous RLVR approaches (e.g., GPPO) with a KL divergence regularization?
}
To answer this question, we conduct comprehensive experiments using the Qwen-2.5-7B and Qwen-2.5-3B models \cite{yang2024qwen2}, selected for their strong mathematical reasoning capabilities. 

\subsection{Experimental Setup}
Our approach contains two versions, both contains forward KL divergence regularization: 
\begin{itemize}
    \item \textbf{$\proj$-light}: $\proj$ training 
    with reweight function $\phi=1$ (no reward awareness);
    \item  \textbf{$\proj$}: $\proj$ training 
    with monotonically increasing reweight function $\phi$ as detailed in Appendix \ref{app:training_details}.
\end{itemize}
We evaluate our approach $\proj$ against the following two baselines:
\begin{itemize}
    \item Base Model: The pretrained Qwen-2.5-7B or 3B models without additional training;
    \item  GRPO-RKL:  GRPO with reverse KL divergence regularization as in \cite{guo2025deepseek}.
\end{itemize}
All experiments employ the simpleRL-reason framework \cite{zeng2025simplerl}. Following established practice \cite{chen2021evaluating,yue2025does},  we use the unbiased pass@$k$ metric
\begin{equation}
    \text{pass@}k:=\mathbb{E}_{x\sim P(x)}\left[ 1 - \frac{C_{n-c}^k}{C_n^k}\right]
\end{equation}
where $n$ ($n\geq k$) solutions are generated for each question and the number of correct solutions is denoted as $c$.

\begin{table}[tbp]
    \centering
    \caption{Comparison of mathematical reasoning performance (Pass@1024) among our $\proj$, the Base Model, and GRPO-RKL. Bold font denotes the best method, and underline denotes the second-best method.
    }
    \label{tab:comparison_baselines}
    \setlength{\tabcolsep}{4.3pt}
    \begin{tabular}{lcccccccc}
        \toprule
        \multirow{3}{*}{Method} & \multicolumn{4}{c}{Qwen-2.5 3B} & \multicolumn{4}{c}{Qwen-2.5 7B} \\
        \cmidrule(lr){2-5} \cmidrule(lr){6-9}
        & \multicolumn{2}{c}{AIME24} & \multicolumn{2}{c}{AIME25} & \multicolumn{2}{c}{AIME24} & \multicolumn{2}{c}{AIME25} \\
        & Hard & Full & Hard & Full & Hard & Full & Hard & Full \\
        \midrule
        Base model           & 0.000 & 0.646 & 0.000      &   0.600    &  0.000     &  \underline{0.777}     &    0.000   &   0.646    \\
        GRPO-RKL             & \textbf{0.125} & \underline{0.656} & 0.175      &  0.646     & 0.250      &   0.744    &   0.166    &   0.657    \\
        \midrule 
        \textbf{$\proj$-light (ours)} & \textbf{0.125} & \textbf{0.661} &  \textbf{0.300}     &   \textbf{0.706}    &  \underline{0.200}     &   0.688    &  \underline{0.220}     &  \textbf{0.800}     \\
        
        \textbf{$\proj$ (ours)} & \textbf{0.125} & 0.630 &  \underline{0.249}     &   \underline{0.653}    &  \textbf{0.350}     &   \textbf{0.809}    &  \textbf{0.479}     &  \underline{0.714}     \\
        \bottomrule
    \end{tabular}
\end{table}

\textbf{Training.} Our training dataset combines GSM8K \cite{cobbe2021training} and MATH \cite{hendrycks2021measuring}. Following the preprocessing methodology of \cite{zeng2025simplerl}, we divide the combined problems into three difficulty brackets—Easy (all GSM8K questions plus level-1 MATH items), Medium (MATH levels 1–4), and Hard (MATH levels 3–5)—with each bracket containing roughly 8,000 examples. 
The training only perform on the Hard bracket. Consistent with recent research \cite{yue2025does}, we initialize all training processes directly from the base model without any supervised fine-tuning (SFT) stage. Detailed training configurations are provided in Appendix \ref{app:training_details}.

\textbf{Evaluation.}
We assess model performance on two challenging and widely used mathematical reasoning benchmarks: 
(1) AIME24: It contains 30 questions from the American Invitational Mathematics Examination 2024; (2) AIME25: It contains 29 questions from the American Invitational Mathematics Examination 2025. 
For both datasets, we define a \textbf{Hard} subset consisting of questions that could not be solved within the maximal number \(n=2048\) of samples  by the Base Model. Evaluations are conducted on both the \textbf{Hard} subset and the \textbf{Full} dataset (comprising all questions).
During inference, we configure the model with a temperature of 0.6, top-p of 0.95, a maximum input length of 1,024 tokens, and a maximum output length of 8,196 tokens. Detailed evaluation protocols and the problem-solving prompts used for inference are provided in Appendices \ref{app:inference_details} and \ref{app:prompts}, respectively.

\subsection{Results}
Table \ref{tab:comparison_baselines} reports the mathematical reasoning performance pass@$1024$ of different methods. We choose $k=1024$ to test the limit of reasoning capabilities for each method under a sufficient number of samples ($n=2048$). From Table \ref{tab:comparison_baselines}, our $\proj$ achieves the highest inference accuracy across two models and datasets, demonstrating superior reasoning capabilities when trained using our approach. Specifically, Table \ref{tab:comparison_baselines} presents the following observations:
\begin{itemize}
    \item On the Full dataset, our method outperforms GRPO-RKL, particularly on the 7B model where we improves AIME24 accuracy from 74.4\% to 80.9\% (8.74\% relative gain) and AIME25 from 65.7\% to 80.0\% (21.77\% gain). On the 3B model, we similarly enhance AIME24 from 65.6\% to 66.1\% (0.76\% gain) and AIME25 from 64.6\% to 70.6\% (9.29\% gain), validating that our $\proj$ is more effective than GRPO-RKL, especially in larger models. 
    \item GRPO-RKL shows negligible improvement over the base model, aligning with prior findings that extensive sampling does not enhance reasoning ability \cite{yue2025does,dang2025assessing}. Our method, however, significantly outperforms the base model, achieving relative gains of 2.32\% (3B) and 4.12\% (7B) on AIME24, and striking improvements of 17.67\% (3B) and 23.84\% (7B) on AIME25. 
    \item Notably, on problems unsolvable by the base model even after $n$ attempts, our method maintains a strong edge: matching GRPO-RKL on AIME24 (3B model) while improving by 42.29\% on AIME25, and outperforming GRPO-RKL by 40\% (AIME24) and 188.55\% (AIME25) on the 7B model. These results underscore the robustness of our approach across model sizes and question difficulty levels.
\end{itemize}

\begin{figure}
    \centering
    \includegraphics[width=1\linewidth]{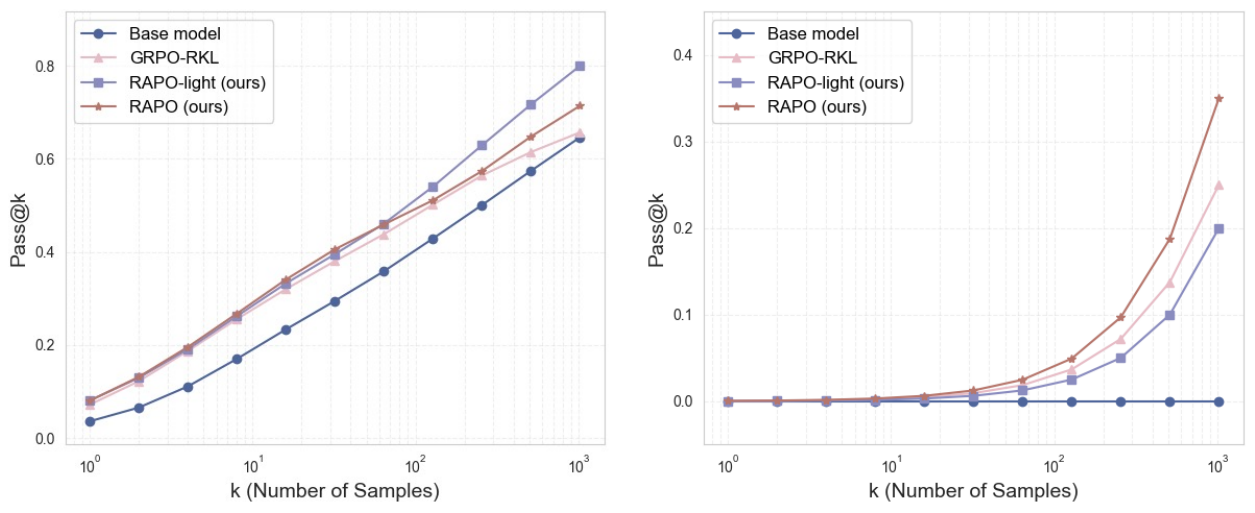}
    \caption{Comparison of mathematical reasoning performance among our $\proj$, the Base Model, and GRPO-RKL on AIME25 Full (\textit{left}) dataset and AIME24 Hard (\textit{right}) subset and Qwen2.5-7B model. Pass@$k$ is evaluated at $k=2^m$ for $m\in[0,1,\cdots,10]$. The total number of samples is $n=2048$.}
    \label{fig:fig3}
\end{figure}

Figure \ref{fig:fig3} illustrates the pass@$k$ performance of different methods as the number of sampling attempts increases on the 7B base model.
The left subfigure shows results for AIME25 Full dataset: when $k$ is small, various RLVR methods (including ours) outperform the base model, but as $k$ increases, GRPO-RKL's pass@$k$ is gradually overtaken by the base model, confirming its inability to surpass the base model's capabilities. In contrast, our RAPO method maintains a stable (RAPO) or increasing (RAPO-light) advantage over the base model, demonstrating its effectiveness in exceeding baseline performance. The right subfigure, focusing on AIME24's hard problems, reveals that while our method and GRPO-RKL start with similar pass@$k$ values, our approach exhibits a steeper upward trajectory as the number of sampling attempts increases, reaching nearly twice GRPO-RKL's performance at $k=1024$. Results at $k=1024$ align with those presented in Table \ref{tab:comparison_baselines}.

\vspace{-0.03in}

\section{Conclusion}
This work tackles a critical limitation in RLVR of LLMs, where the conventional KL-divergence constraint confines models to their base model's capabilities. 
In this work, we have introduced RAPO, a novel RLVR method designed to enable more effective exploration while maintaining solution quality. Experiments on Qwen2.5 7B and 3B models and mathematical reasoning benchmarks demonstrate that RAPO achieves consistent performance gains across sampling budgets and enables trained models to surpass base-model performance ceilings and solve previously intractable problems. 

\section{Limitation and Future Work}\label{future_work}
Our method has several limitations, offering opportunities for future research:
(1) Sample Efficiency Trade-off: While $\proj$ excels at discovering novel solutions with large sampling budgets, its advantage diminishes with limited sampling. This indicates a trade-off where broader exploration comes at the cost of efficiency in high-probability regions. Future work should develop strategies that maintain exploration capabilities while optimizing performance under small sampling budgets.
(2) Domain Applicability: Our experiments focus on mathematical reasoning tasks with clearly verifiable answers. The effectiveness of 
$\proj$ in domains with less structured rewards or in mathematical theorem proving, which requires step-by-step verification, remains unexplored. Future research should develop more fine-grained reward utilization mechanisms capable of evaluating intermediate or less structured reasoning steps.

\bibliographystyle{unsrt}
\bibliography{references}
\newpage

\clearpage
\appendix
\onecolumn

\part{Appendix} 

\section{Derivation of Theoretical Analysis}
\label{app:derivation}

\begin{proof}[Proof of Lemma \ref{lemma:1}]
Consider the following variational optimization problem associated with Eq. \ref{eq:origional_RKL}:
\begin{align*}
    \mathcal{L}(\pitheta)& =\int \pitheta(y|x) \, r(x, y) \, dy\,dx - \alpha \int \pitheta(y|x) \log \frac{\pitheta(y|x)}{\piref(y|x)} dy\,dx \\
    &
    = \int \pitheta(y|x)\, r(x, y) \, dy\,dx 
    \\
    &\quad 
    - \alpha \int \pitheta(y|x) \log \pitheta(y|x)\, dy\,dx
    + \alpha \int \pitheta(y|x) \log \piref(y|x) \, dy\,dx.
\end{align*}

Here we use continuous variables in the above formula. The discrete variable case simply replaces the integral symbol with summation and replaces and continuous $y$ by $y_i$. We maximize \(L(\pitheta)\) subject to the constraint $\int \pitheta(y|x) dy\,dx = 1$ by introducing a Lagrange multiplier $\lambda$:
\begin{align*}
    \tilde{\mathcal{L}}(\pitheta, \lambda) &= 
    \int \pitheta(y|x)\, r(x, y)\, dy\,dx
    - \alpha \int \pitheta(y|x) \log \pitheta(y|x)\, dy\,dx
    \\
    &\quad + \alpha \int \pitheta(y|x) \log \piref(y|x) \, dy\,dx 
    + \lambda \left( \int \pitheta(y|x) dy\,dx - 1 \right).
\end{align*}

The stationary point of this functional is given by setting its functional derivative w.r.t $\pitheta$ to zero:
\begin{equation*}
    \frac{\delta \tilde{\mathcal{L}}}{\delta \pitheta} = r(x, y) - \alpha(1 + \log \pitheta(y|x)) + \alpha \log \piref(y|x) + \lambda = 0.
\end{equation*}
Solving for $\log \pitheta(y|x)$, we have
\begin{equation*}
    \log \pitheta(y|x) = \frac{r(x, y)}{\alpha} + \log \piref(y|x) - 1 + \frac{\lambda}{\alpha},
\end{equation*}
which implies
\begin{equation*}
    \pitheta(y|x) = e^{\frac{r(x, y)}{\alpha}} \piref(y|x) e^{(-1 + \frac{\lambda}{\alpha})}.
\end{equation*}
The factor $e^{(-1 + \frac{\lambda}{\alpha})}$ serves as a normalization constant, therefore, the optimal solution satisfies
\begin{equation*}
    \boxed{
    \pitheta^{\star} (y|x)\propto  e^{\frac{r(x, y)}{\alpha}} \piref(y|x).
    }
\end{equation*}
\end{proof}

\begin{proof}[Proof of Lemma \ref{lemma:2}]
Similar to the previous proof, the variational optimization problem associated with Eq. \ref{eq:origional_RKL_max_entropy} is
\begin{align*}
L(\pitheta) 
&= \int \pitheta(y|x) r(x, y) \, dy\,dx \\
&\quad + \alpha \int \pitheta(y|x) \log \piref(y|x) \, dy\,dx 
- (\alpha+\beta) \int \pitheta(y|x) \log \pitheta(y|x) \, dy\,dx .
\end{align*}
The Lagrangian form is:
\begin{align*}
    \tilde{\mathcal{L}}(\pitheta, \lambda) &= 
    \int \pitheta(y|x)\, r(x,y)\, dy\,dx + \alpha \int \pitheta(y|x) \log \piref(y|x)\, dy\,dx 
    \\
    &\quad - (\alpha+\beta)\int \pitheta(y|x) \log \pitheta(y|x) \, dy\,dx 
    + \lambda \left( \int \pitheta(y|x) dy\,dx - 1 \right).
\end{align*}
Compute the functional derivative w.r.t. \(\pitheta\):
\begin{align*}
\frac{\delta \tilde{L}}{\delta \pitheta} &= r(x,y) + \alpha \log \piref(y|x) - (\alpha+\beta)(1 + \log \pitheta(y|x)) + \lambda.
\end{align*}
Set this to zero for a stationary point:
\begin{equation*}
    0 = r(x,y) + \alpha \log \piref(y|x) - (\alpha+\beta)(1 + \log \pitheta(y|x)) + \lambda.
\end{equation*}
Solve $\pitheta$, we have:
\begin{equation*}
    \pitheta(y|x) = e^{\frac{r(x,y)}{\alpha+\beta}} \piref(y|x)^{\frac{\alpha}{\alpha+\beta}} e^{\frac{ \lambda - (\alpha+\beta) }{ \alpha+\beta }}.
\end{equation*}
The final exponential is a normalization constant, thus
\begin{equation*}
    \boxed{
    \pitheta^{\star}(y|x)\propto e^{\frac{r(\x,\y)}{\alpha + \beta}}\piref(\y|\x)^{\frac{\alpha}{\alpha+\beta}}.
    }
\end{equation*}
\end{proof}

\begin{proof}[Proof of Proposition \ref{prop:1}]
Taking the gradient of Eq. \ref{eq:discrete_FKL_max_entropy} w.r.t. $\pitheta(\y_i|\x)$, we have 
\begin{equation*}
    \frac{\partial \JFKL}{\partial \pitheta(\y_i|\x)}=r(\x,\y_i)+\alpha\frac{\piref(\y_i|x)}{\pitheta(\y_i|\x)} - \beta \pitheta(\y_i|\x) -\beta-\lambda.
\end{equation*}
The condition for stationary solution is
\begin{equation*}
    r(\x,\y_i)+\alpha\frac{\piref(\y_i|x)}{\pitheta(\y_i|\x)} - \beta \pitheta(\y_i|\x) -\beta-\lambda=0.
\end{equation*}
Define $F_i:[0,+\infty)\rightarrow \mathbb{R}$ as
\begin{equation*}
    F_i(u)=\alpha\frac{\piref(\y_i|x)}{u}-\beta\log u.
\end{equation*}
We have the following two cases:
\begin{itemize}
    \item When $\piref(\y_i|x)>0$, $\lim_{u\rightarrow 0^+}=+\infty$, $\lim_{u\rightarrow {+\infty}}=-\infty$. Since $F$ is continuous, there exits a solution $u^*$ for 
    \begin{equation}\label{eq:F}
        F_i(u)=\beta +\lambda-r(x,y_i).
    \end{equation}
    Define the solution for the above equation as  
    \begin{equation*}
        u^*=g(\piref(\y_i|\x), r(\x,\y_i); \alpha, \beta, \lambda).
    \end{equation*}
    \item When $\piref(\y_i|x)=0$, then the problem Eq. \ref{eq:F} reduces to 
    \begin{equation*}
        -\beta\log \pitheta(\y_i|\x)=\beta +\lambda-r(x,y_i),
    \end{equation*}
    which implies 
    \begin{equation*}
        \pitheta^{\star}(\y_i|\x) =
        e^{-1-\lambda/\beta+r(\x,\y_i)/\beta}.
    \end{equation*}
\end{itemize}
Summarizing these two cases, we get the desired optimal solutions.
\end{proof}

\section{Training Details}
\label{app:training_details}
For the rollout, the question batch size is 512 and we sample 8 solutions for each question. The coefficient of the KL penalty is 0.001. We train the 7B model for a total of 100 steps over 15 hours using eight GPUs.

\textbf{Rule-based reward}. Recent research reveals that directly using the reward model during RLVR usually suffers from the reward hacking problem \cite{amodei2016concrete}. Hence, we use the rule-based reward that assigns 1 for correct answers and 0 for incorrect ones.

\textbf{Design of $\phi(r)$}. We adopt two simple forms for $\phi(r)$:\\
\begin{itemize}
    \item Case 1 (inverse-proportional function): 
        \begin{equation}
            \phi(r)=\frac{1}{\tau_{\text{max}} - r}.
        \end{equation}
    \item  Case 2 (tanh function): 
        \begin{equation}
            \phi(r)=\frac{1+\text{tanh} (r)}{2}.
        \end{equation}
\end{itemize}
Empirically, with $\tau_{\text{max}}=2.2$  (where $\phi(r)\in[1/2.2, 1/1.2]$ for $r\in[0,1]$), the inverse-proportional function outperforms the tanh function on the Qwen2.5-3B model, while the tanh function is slightly superior on the Qwen2.5-7B model.

\section{Inference Details}
\label{app:inference_details}
For evaluation, we used a temperature of 0.6, top-p of 0.95, and a maximum generation length of 16K tokens for inference across all RLVR-trained models and the base model. We maintained consistency by using the same prompt template as in training. For the AIME 24 dataset, we sampled $n=2048$ responses per question and evaluated the unbiased Pass@1024. 

We adopted the open-source RL language model training framework Simple RL Reason \url{https://github.com/hkust-nlp/simpleRL-reason?tab=readme-ov-file} for our project. We utilized the Zero approach throughout the training process, meaning no Supervised Fine-tuning phase was involved. The training was conducted using the VeRL framework, while the inference engine is based on VLLM.

\section{The Training and Evaluation Prompt}
\label{app:prompts}
We use the following Qwen-Box prompts for RLVR training and evalution:

\DefineVerbatimEnvironment{flexverb}{Verbatim}{%
  fontfamily=courier,          
  commandchars=\\\{\},         
  formatcom=\ttfamily          
}

\begin{tcolorbox}[
  enhanced,
  colback=blue!10,
  colframe=blue!75!black,
  coltitle=white,
  title=Math Reasoning Prompt,
  fonttitle=\bfseries,
  arc=3mm,
  boxrule=1pt,
  top=5pt, bottom=5pt, left=20pt, right=20pt
]
\begin{flexverb}
<|im_start|>system
You are a helpful assistant.<|im_end|>
<|im_start|>user
\textcolor{blue}{\{question\}}
Please reason step by step,
and put your final answer within \textbackslash boxed\{\}.<|im_end|>
<|im_start|>assistant
\end{flexverb}
\end{tcolorbox}

\end{document}